\documentclass{article}

\usepackage[preprint]{neurips_2026}
\usepackage{amsmath}

\usepackage{subfiles}
\usepackage{tikz}
\usetikzlibrary{positioning}
\usepackage{tablefootnote}
\usepackage{tabularx}
\usepackage{threeparttable}
\usepackage{multirow}
\graphicspath{{img/}}
\usepackage{subfiles}
\usepackage{algorithm}
\usepackage{algorithmic}
\usepackage{amssymb}
\usepackage{subcaption}
\usetikzlibrary{calc}

\usepackage[utf8]{inputenc} 
\usepackage[T1]{fontenc}    
\usepackage{hyperref}       
\usepackage{url}            
\usepackage{booktabs}       
\usepackage{amsfonts}       
\usepackage{nicefrac}       
\usepackage{microtype}      
\usepackage{xcolor} 

\usepackage{amsthm}
\theoremstyle{plain}
\newtheorem{theorem}{Theorem}[section]

\theoremstyle{definition}

\theoremstyle{remark}
\newtheorem{remark}[theorem]{Remark}

\definecolor{framebg}{RGB}{198,220,220}
\definecolor{curtain}{RGB}{178,197,195}
\definecolor{skin}{RGB}{205,160,120}
\definecolor{table}{RGB}{175,190,185}
\definecolor{query}{RGB}{36,36,48}
\definecolor{spaceRed}{RGB}{238,86,56}
\definecolor{timeGreen}{RGB}{30,185,40}
\definecolor{localYellow}{RGB}{226,223,38}
\definecolor{globalMagenta}{RGB}{190,55,210}
\definecolor{windowBlue}{RGB}{52,82,190}
\definecolor{heightPink}{RGB}{230,65,180}
\definecolor{textgray}{RGB}{20,24,30}

\newcommand{\baseframe}[2]{%
    \begin{scope}[shift={(#1,#2)}]
        \path[clip] (0,0) rectangle (2.28,2.28);
        \fill[framebg] (0,0) rectangle (2.28,2.28);
        \fill[curtain!75] (1.05,0) rectangle (2.28,2.28);
        \fill[table!65] (0,0) rectangle (2.28,0.8);
        \fill[skin!80] (0.12,1.18) .. controls (0.55,1.15) and (0.8,1.32) .. (1.12,1.36)
            .. controls (1.42,1.4) and (1.52,1.24) .. (1.34,1.12)
            .. controls (0.94,0.98) and (0.5,0.98) .. (0.12,1.02) -- cycle;
        \fill[white!65] (1.42,0.82) rectangle (1.72,1.52);
        \draw[white,line width=3.2pt] (0.57,0) -- (0.57,2.28);
        \draw[white,line width=3.2pt] (1.14,0) -- (1.14,2.28);
        \draw[white,line width=3.2pt] (1.71,0) -- (1.71,2.28);
        \draw[white,line width=3.2pt] (0,0.57) -- (2.28,0.57);
        \draw[white,line width=3.2pt] (0,1.14) -- (2.28,1.14);
        \draw[white,line width=3.2pt] (0,1.71) -- (2.28,1.71);
        \draw[white,line width=1.0pt] (0,0) rectangle (2.28,2.28);
    \end{scope}
}

\newcommand{\querydot}[2]{%
    \begin{scope}[shift={(#1,#2)}]
        \fill[query] (1.42,1.42) circle (0.09);
    \end{scope}
}

\newcommand{\overlayall}[3]{%
    \begin{scope}[shift={(#1,#2)}]
        \fill[#3,opacity=0.52] (0,0) rectangle (2.28,2.28);
    \end{scope}
}

\newcommand{\overlaycell}[4]{%
    \begin{scope}[shift={(#1,#2)}]
        \fill[#4,opacity=0.55] (#3) rectangle ++(0.57,0.57);
    \end{scope}
}

\newcommand{\drawstack}[2]{%
    \baseframe{#1}{5.20}
    \baseframe{#1}{2.60}
    \baseframe{#1}{0.00}
}

\title
{VideoSEMA: a scalable and efficient Mamba-like attention for video understanding} %

\author{%
 Nhat Thanh Tran\thanks{Corresponding author}\\
  Department of Mathematics\\
  University of California, Irvine\\
  Irvine, USA \\
  \texttt{nhattt@uci.edu} \\
  \And
  Fanghui Xue \\
  Qualcomm AI Research\thanks{Qualcomm AI Research is an initiative of Qualcomm Technologies, Inc.} \\
  San Diego, USA \\
  \texttt{fangxue@qti.qualcomm.com} \\
  \And
  Shuai Zhang \\
  Qualcomm AI Research\\ 
  San Diego, USA \\
  \texttt{shuazhan@qti.qualcomm.com} \\
  \And
  Jiancheng Lyu \\
  Qualcomm AI Research\\
  San Diego, USA \\
  \texttt{jianlyu@qti.qualcomm.com} \\
  \And
  Yunling Zheng \\
  Qualcomm AI Research\\
  San Diego, USA \\
  \texttt{yunlzhen@qti.qualcomm.com} \\
  \And
  Yingyong Qi \\
  Qualcomm AI Research\\
  San Diego, USA \\
  \texttt{yingyong@qti.qualcomm.com} \\  
  \And
  Jack Xin \\
  Department of Mathematics\\
  University of California, Irvine\\
  Irvine, USA \\
  \texttt{jack.xin@uci.edu} \\
}

\begin{document}

\maketitle

\begin{abstract}
We present for video understanding (classification) a split space-time attention model, VideoSEMA, consisting of a scalable and efficient Mamba-like attention (SEMA) block in space and a softmax temporal attention in time. In each frame, SEMA attention applies a local window attention in parallel with a global averaging in a Mamba macro-architecture, which is called Mamba-like. Under certain rank conditions, 
we prove that the computationally cheaper split space-time attention is equivalent to full space-time attention. On benchmark K400 data sets, VideoSEMA out-performs heavier 
vision transformer and Mamba models. On benchmark SSv2 data, VideoSEMA leads in top-1 accuracy among models of similar parameter sizes. As image resolution scales up from standard $224^2$ to $1024^2$ on K400 and without fine-tuning, VideoSEMA degrades much more gracefully than VideoMamba in accuracy.
It is promising to extend VideoSEMA to longer videos with a dilated/sparse temporal attention.

\end{abstract}

%

\section{Introduction}
Understanding complex spatiotemporal patterns in videos remains a fundamental challenge in computer vision, with application ranging from classification and segmentation to world modeling for autonomous system such as robotics and self-driving vehicles. With the widespread of usage of multimodal models in combination with language models, recent advances have leveraged transformer based architecture and large foundation models to capture long range dependencies, achieving state of the art performance \cite{gberta_2021_ICML_space_time_attention, Fan_2021_ICCV, Li_2022_CVPR, intern_video2, Wang_2025_ICCV, assran2025vjepa2selfsupervisedvideo}. Despite their success, these models predominately rely on Vision Transformer (ViT) backbones, which incur high computational costs for large video inputs. To alleviate this problem, some works explores Mamba \cite{gu2024mamba} as backbone, which reduces the computation cost for long sequences, such as in VideoMamba \cite{li2024videomambastatespacemodel}. Linear attention is another approach to reduce the cost of classical attention mechanism and WLiT demonstrates that it is an effective way to process video \cite{sun2023wlit}. Lately, Mamba-like attention models have been developed as an efficient replacement of classical softmax attention in image application \cite{mila_2024}. 
Mamba-like means to keep the macro-architecture of Mamba yet embed in it light weight non-Mamba attention blocks. Along this line, SEMA \cite{tran2025semascalableefficientmamba} is designed to mimic the exponential forgetting property of Mamba by an asymptotically guided global approximation of softmax attention. For a duality view and treatment of 
softmax and efficient attentions, 
see \cite{primal_dual_attn_2023,AFIDAF_2024}. An operator splitting and integro-differential equation
perspective of transformer is in \cite{Tai_Trans_2025}. 
In this work, we explore the efficacy of SEMA backbone in video applications.
\medskip

Our main contributions are:
\medskip

\begin{itemize}
   \item We develop a novel video model in the split space-time attention framework based on a Mamba-like scalable and efficient image backbone (SEMA,\cite{tran2025semascalableefficientmamba}). 
   \medskip
   
   \item To our best knowledge, this is the first work to explore Mamba-like backbones for videos. 
   \medskip

   \item We show theoretically that the split space-time attention 
   is equivalent to the full
   space-time attention 
   under specific rank conditions.  
   \medskip
   
   \item  We demonstrate on K400 (SSv2) classification tasks that VideoSEMA is an efficient model, outperforming much larger (comparable) parameter and flops size transformer and Mamba video models to date. Post-training and without fine-tuning on K400, it is much more robust than VideoMamba \cite{li2024videomambastatespacemodel} as video frames scale up in size.
\end{itemize}

    
    

\section{Related Works}

Video representation learning has evolved from CNN to attention based models to hierarchical and efficient sequence architectures such as Mamba \cite{gu2024mamba}. TimeSformer \cite{gberta_2021_ICML_space_time_attention} demonstrated that factorized space time attention can replace 3D convolutions for video recognition, while MViT and MViTv2  \cite{Fan_2021_ICCV, Li_2022_CVPR} introduced multiscale hierarchical Transformer that improves efficiency and scalability across image and video tasks. More recently, state space models have been explored for long range temporal modeling, with VideoMamba achieving competitive performance on video understanding tasks \cite{li2024videomambastatespacemodel}. At scale, foundation models such as InternVideo2 unify self-supervised, constrastive, and generative objectives for multimodal video understanding \cite{intern_video2}. Other efforts focus on efficiency and prediction such as adaptive token strategies improving training and inference flexibility \cite{Wang_2025_ICCV}, while V-JEPA 2 advanced self-supervised world modeling for motion prediction and long horizon reasoning in video \cite{assran2025vjepa2selfsupervisedvideo}. However, many of these rely on ViT as a vision backbone, whereas our work explores a light weight Mamba-like model (SEMA \cite{tran2025semascalableefficientmamba}) as an alternative backbone to capture spatial and temporal features efficiently.



\section{Methodology}

\subsection{Preliminary}

\subsubsection{Attention and SEMA}
Attention is a core component of Transformer which is the main driving force of deep learning in recent years. Formally, given an input $x\in\mathbb{R}^{n\times d}$, then softmax full attention is defined as:
\begin{equation}\label{eq:full_attn}
    A(Q,K,V) = \text{softmax}(QK^T)V,
\end{equation}
where $Q = xW_Q + b_Q, K = xW_K + b_K, V = xW_V + b_V$, for $W_Q, W_K, W_V\in \mathbb{R}^{d\times d}$ and $b_Q, b_K, b_V\in\mathbb{R}^{n\times d}$. We observe that compute the attention matrix ($\text{softmax}(QK^T)$) requires $\mathcal{O}(n^2)$ operations. Also as $n\rightarrow \infty$, the attention matrix disperses, i.e. tending to zero uniformly \cite{tran2025semascalableefficientmamba}. Thus, it is unable to distinguish the variations across the keys. On the other hand, SEMA \cite{tran2025semascalableefficientmamba} is designed to mimic the recurrent relation of 
Mamba \cite{gu2024mamba} in the large token number limit with an averaging operation to approximate the global aspect of the attention matrix. This alleviates the burden in calculating full attention, with an access to global information in an efficient manner. Concretely, 
\begin{equation}\label{eq:sema}
    SEMA(Q,K,V) := A_w(Q,K,V) + \left [\frac{1}{n}\sum_{j=1}^{n} v_j\right ],
\end{equation}
where $[\cdot]$ broadcasts the row $n$ times to permit matrix addition and $A_w$ is window attention \cite{Liu_2021_ICCV_Swin} defined as:
\begin{equation}\label{eq:win_attn_proof}
        A_w(Q, K, V):= 
        \begin{bmatrix}
              \dfrac{\sum_{j\in J(1)}\exp(q_1k_j^T)v_j}{\sum_{i\in J(1)} \exp(q_1k_i^T)}\\
            \vdots \\
             \dfrac{\sum_{j\in J(n)}\exp(q_nk_j^T)v_j}{\sum_{i\in J(n)} \exp(q_nk_i^T)}
        \end{bmatrix},
    \end{equation}
    for some index set $J(m)$. An example is $J(m) = \{Mw +1,\dots, (M+1)w\}$, where $M = \lfloor\frac{m-1}{w}\rfloor$.
\medskip

The latter term of Eq. (\ref{eq:sema}) is proven to be a good approximation for $\text{softmax}(QK^T)$ as $n\rightarrow \infty$ under realistic assumption of the input feature $x$, and also with high probability that such an approximation holds \cite{tran2025semascalableefficientmamba}. Thus SEMA is an effective mechanism to process large input sequence with $\mathcal{O}(n)$ computational complexity. 


\subsubsection{Mamba as a Recursive Attention}
Mamba \cite{gu2024mamba} is a state space models to handle long input sequence with computational complexity of $\mathcal{O}(n)$. For complete derivation of state space models, we refer 
to \cite{gu2024mamba, NEURIPS2020_ssm_hippo}. Concretely, Mamba is a map from $x$ to $y$ through the dynamical system:
\begin{align}\label{eq:mamba}
h_t &= A_t \odot  h_{t-1} + B_t( \Delta_t \odot x_t),\\
y_t &= C_t h_t + D \odot x_t,
\end{align}
where $x_t, \Delta_t \in \mathbb{R}^{1\times d}, A_t, h_t \in \mathbb{R}^{d\times d}, B_t \in\mathbb{R}^{d\times 1}$ and $y_t\in \mathbb{R}^{1\times d}, C_t \in\mathbb{R}^{1\times d}, D\in\mathbb{R}^{1\times d}$, and $\odot$ denotes the Hadamard product.  
It follows in discrete form that 
\begin{equation}\label{eq:mamba_linear_attn_rep}
    y_m = \sum_{i=1}^m q_m \Tilde{k}_i^T\Tilde{v}_i + D\odot x_m,
\end{equation}
where $\Tilde{k}_i^T = \left(\prod_{j=1}^{m-i} A_{m-(j-1)}\right) \odot B_i$, $\Tilde{v}_i = \Delta_i \odot x_i$, $\prod$ is short for multiple matrix products in the elementwise (Hadamard) sense, with the convention that $\prod$ acts as identity if the upper index is zero.
The first term in (\ref{eq:mamba_linear_attn_rep}) 
reveals the $(Q,K,V)$ structure
implicit in Mamba, while the second term
can be understood as a skip connection (a casual masked attention).

\subsection{Proposed Method - VideoSEMA}

Given an input video $x\in\mathbb{R}^{C\times T\times h \times w}$. VideoSEMA first tokenizes using a 3D convolution (e.g. kernel=(1,4,4)) to project the input video into $X\in\mathbb{R}^{C\times T\times H \times W}$ of non-overlapping spatiotemporal patches where $H= h/4, W=w/4$. Second, we append a learnable classifier token at the end of the sequence. The last step of pre-processing is to add learned positional embedding and then temporal embedding. Concretely,
\begin{subequations}
\begin{align}
    X &= \text{3DConv}(x) \\
    X &= [X, X_{cls}] + \text{Emb}_{pos} + \text{Emb}_{temp},
\end{align}
\end{subequations}
where $X_{cls}\in\mathbb{R}^{H\times W\times C}$ is classifier token, $\text{Emb}_{pos}\in\mathbb{R}^{H\times W\times C}$ and $\text{Emb}_{temp}\in\mathbb{R}^{T\times C}$ are learned positional and temporal embedding respectively. Here the addition is broadcast along the appropriate dimension to enable matrix addition.
\medskip

Next, we will process the spatial and temporal information of the input via the VideoSEMA block. The VideoSEMA block is constructed as 
\begin{subequations}    
\begin{align}
    y &= Spatial\_SEMA(X),\label{vsema_spt}\\
    y' &= LayerNorm(y),\\
    z' &= Temp\_Attn(y'),\label{vsema_tmp}\\
    z &= X + z', 
\end{align}
\end{subequations}
where spatial function operates over the $H\times W$ dimension of the input, while the temporal function operates over the $T$ dimension. To be precise, for $Q,K,V\in\mathbb{R}^{T\times H \times W \times C}$, we have
\begin{align}
 Spatial\_SEMA(q_{t,h,w})  &:= \dfrac{1}{HW} \sum_{i=1}^{H} \sum_{j=1}^{W} v_{t,i,j}\notag \\ & + \sum_{l,p\in I(h,w)} \dfrac{\exp(q_{t,h,w}k_{t,l,p}^T)}{\sum_{i,j\in I(h,w)}\exp(q_{t,h,w}k_{t,i,j}^T)} v_{t,l,p},
\end{align} 
where $I(h,w)$ is the index set for the window, e.g. window size of $7$. And temporal attention
\begin{equation}
Temp\_Attn(q_{t,h,w}) :=
\sum_{i=1}^{T} \dfrac{\exp(q_{t,h,w}k_{i,h,w}^T)}{\sum_{j=1}^{T}\exp(q_{t,h,w}k_{j,h,w}^T)} v_{i,h,w}. 
\end{equation}


The alternating spatial and temporal attention 
treatment in (\ref{vsema_spt}) and (\ref{vsema_tmp}) can be viewed as an efficient operator splitting approximation of the full SEMA 
obtained by directly extending SEMA formula (\ref{eq:sema})
to space and time. For video frames of moderate lengths considered here, a softmax attention in (\ref{vsema_tmp}) is affordable and effective as supported by our ablation study, making linear complexity approximations unnecessary in time. As seen later, VideoSEMA outperforms much heavier 
networks (Tab.1). 
As a future direction for handling longer videos, a sparse or dilated attention \cite{Longnet_2023} in the temporal attention step is a promising alternative  to leverage continuity of information in time at reduced computational costs. 
\medskip

VideoSEMA's macro-structure is shown in Fig.\ref{fig:VideoSEMA_macro}, and is 
repeated in the network. 
Between two adjacent repeats, we use a convolutional layer to down-sample the spatial dimension of the video signal to produce a hierarchical network 
(similar to  \cite{mila_2024,tran2025semascalableefficientmamba,Liu_2021_ICCV_Swin}). Lastly in Algorithm \ref{alg:VideoSEMA}, the representation of the $[CLS]$ token is normalized before passing through a linear classification head.

\begin{figure}
    \centering
    \includegraphics[width=0.8\linewidth]{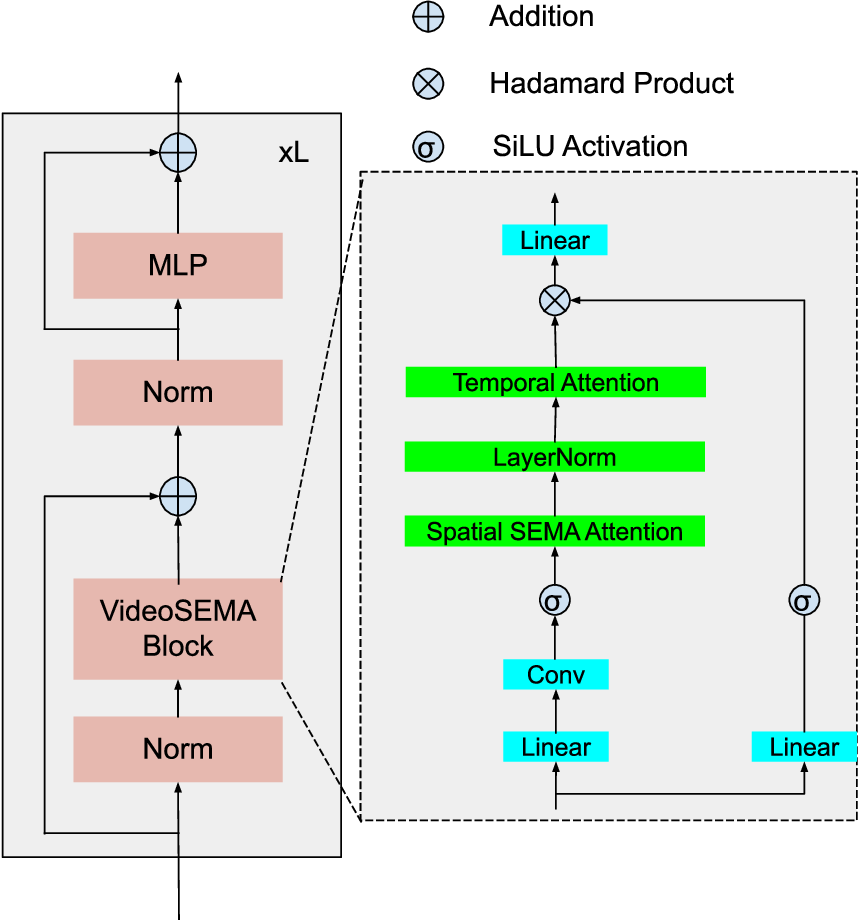}
    \caption{Overview of VideoSEMA macro-structure.}
    \label{fig:VideoSEMA_macro}
\end{figure}


\begin{figure}
    \centering

\begin{tikzpicture}[font=\sffamily]
\fill[white] (-1.20,-1.30) rectangle (10.75,7.80);

\def\xJoint{0.0}
\def\xDivided{3.85}
\def\xSEMA{7.70}

\drawstack{\xJoint}{0}
\drawstack{\xDivided}{0}
\drawstack{\xSEMA}{0}

\overlayall{\xJoint}{5.20}{spaceRed}
\overlayall{\xJoint}{2.60}{spaceRed}
\overlayall{\xJoint}{0.00}{spaceRed}
\querydot{\xJoint}{2.60}

\overlaycell{\xDivided}{5.20}{1.14,1.14}{windowBlue}
\overlayall{\xDivided}{2.60}{spaceRed}
\overlaycell{\xDivided}{0.00}{1.14,1.14}{windowBlue}
\querydot{\xDivided}{2.60}

\overlaycell{\xSEMA}{5.20}{1.14,1.14}{windowBlue}
\overlayall{\xSEMA}{2.60}{yellow!75!orange}
\overlaycell{\xSEMA}{2.60}{1.14,1.14}{spaceRed}
\overlaycell{\xSEMA}{2.60}{0.57,1.14}{spaceRed}
\overlaycell{\xSEMA}{2.60}{1.71,1.14}{spaceRed}
\overlaycell{\xSEMA}{2.60}{1.14,0.57}{spaceRed}
\overlaycell{\xSEMA}{2.60}{1.14,1.71}{spaceRed}
\overlaycell{\xSEMA}{5.20}{1.14,1.14}{windowBlue}
\overlaycell{\xSEMA}{0.00}{1.14,1.14}{windowBlue}
\querydot{\xSEMA}{2.60}

\foreach \x in {\xJoint,\xDivided,\xSEMA} {
    \foreach \y in {5.20,2.60,0.00} {
        \begin{scope}[shift={(\x,\y)}]
            \draw[white,line width=3.2pt] (0.57,0) -- (0.57,2.28);
            \draw[white,line width=3.2pt] (1.14,0) -- (1.14,2.28);
            \draw[white,line width=3.2pt] (1.71,0) -- (1.71,2.28);
            \draw[white,line width=3.2pt] (0,0.57) -- (2.28,0.57);
            \draw[white,line width=3.2pt] (0,1.14) -- (2.28,1.14);
            \draw[white,line width=3.2pt] (0,1.71) -- (2.28,1.71);
            \draw[white,line width=1pt] (0,0) rectangle (2.28,2.28);
        \end{scope}
    }
}

\querydot{\xJoint}{2.60}
\querydot{\xDivided}{2.60}
\querydot{\xSEMA}{2.60}

\node[rotate=90,anchor=center,font=\bfseries\large] at (-0.62,6.34) {frame $t-\delta$};
\node[rotate=90,anchor=center,font=\bfseries\large] at (-0.62,3.74) {frame $t$};
\node[rotate=90,anchor=center,font=\bfseries\large] at (-0.62,1.14) {frame $t+\delta$};

\node[align=center,font=\bfseries\small,text=textgray] at (\xJoint+1.14,-0.62) {Joint\\Space-Time\\Attention (\textcolor{spaceRed}{ST})};
\node[align=center,font=\bfseries\small,text=textgray] at (\xDivided+1.14,-0.62) {Split\\Space-Time\\Attention (\textcolor{windowBlue}{T}+\textcolor{spaceRed}{S})};
\node[align=center,font=\bfseries\small,text=textgray] at (\xSEMA+1.14,-0.62) {SEMA Split\\Space-Time Attention \\(\textcolor{windowBlue}{T}+\textcolor{spaceRed}{window}+\textcolor{yellow!75!orange}{average})};

\end{tikzpicture}
    \caption{Visualization of space time attention types. For illustration, the dark dot denotes the query patch and colored patches show
its self-attention space-time neighborhood under each scheme. Patches without
color are not used for the self-attention computation of the query patch.
Multiple colors within a scheme denote attentions separately applied along
different dimensions, e.g., space and time for \((T+S)\). Note that
self-attention is computed for every single patch in the video clip, i.e., every
patch serves as a query. Although the attention pattern is shown for only two
adjacent frames, it extends in the same fashion to all frames of the clip. }
    \label{fig:spacetime_attention_type}
\end{figure}

\begin{algorithm}[tb]
\caption{Proposed VideoSEMA algorithm. Here cls, temp, pos denote classifier tokens, temporal embedding, and positional embedding respectively. When there are dimensions mismatch in binary operations, there are implicit reshaping/broadcasting to match the dimensions. On the right hand side, we denote the current running shape of $x$.}
\label{alg:VideoSEMA}
    \textbf{Input}: Input $x$\hfill ($C\times T\times H\times W$)\\
    \textbf{Learnable Params}: cls $\in \mathbb{R}^{H\times W\times C}$, temp $\in\mathbb{R}^{T\times C}$, pos $\in\mathbb{R}^{HW\times C}$ 
    \begin{algorithmic}[1]
        \STATE $x =$ reshape($x$) \hfill ($T \times H \times W \times C$)
        \STATE $x =$ concat(($x$, cls), dim=0) \hfill ($(T+1)\times H\times W\times C$)
        \STATE $x$ = reshape($x$) \hfill ($(T+1)\times HW\times C$)
        \STATE $x$ = $x + $pos \hfill ($(T+1)\times HW \times C$)
        \STATE $x = x + $temp \hfill ($(T+1)\times HW \times C$)
        \STATE $x =$ reshape($x$) \hfill ($(T+1)\times HW \times C$)
        \STATE $x$ = VideoSEMA($x$)\hfill ($(T+1)\times HW\times C$)
            \begin{ALC@g}
                \STATE Process spatial information via SEMA
                \STATE Process temporal information via classical attention
            \end{ALC@g}
        \STATE $x$ = AvgPooling($x$)\hfill ($(T+1)\times  C$)
        \STATE \textbf{Return} $x[-1,:]$ \hfill ($1 \times C$)
    \end{algorithmic}

\end{algorithm}

\subsection{Complexity}
Given an input video 
$x\in\mathbb{R}^{T\times H\times W\times C}$, the joint space-time attention treats the video as a sequence of length $N=THW$. Therefore, the attention matrix has size $N\times N$, yielding computational complexity $\mathcal{O}((THW)^2C)$. In practice, this formulation is prohibitively expensive for videos due to the quadratic memory and compute cost of the attention matrix. Consequently, fully joint space time attention is rarely used in large scale video models \cite{gberta_2021_ICML_space_time_attention}.
\medskip

The split space time attention approach works with a factorized (or separate spatial and temporal) attention, thereby lower the computational costs similar to operator splitting \cite{OperSplit_1968,Oper_Split_2017}. In particular, TimeSformer \cite{gberta_2021_ICML_space_time_attention} applies a split 2-step procedure: (1) spatial attention independently in each frame, and (2) temporal attention independently across frames at each spatial location. The  spatial attention complexity in (1) is $\mathcal{O}(T(HW)^2 C)$ as each of the $T$ frames performs full attention over the $HW$ spatial tokens. The temporal attention complexity in (2) is $\mathcal{O}(T^2HWC)$ as each spatial location performs full attention over $T$ tokens. So the total complexity of the split attention design becomes $\mathcal{O}((T^2HW + T(HW)^2 )C)$.
\medskip

In contrast, VideoSEMA replaces full spatial attention with SEMA, whose complexity scales linearly with spatial dimension $\mathcal{O}(THWwC)$ where $w$ denotes the window size. The temporal attention is still performed globally, thus the total computational complexity of VideoSEMA is $\mathcal{O}((T^2HW + THWw )C)$, linear in spatial resolution $HW$. In the datasets of this paper, the number of video frames is at most 32 (i.e. $T\leq 32$ in Tab. 1, and $T\leq 16$ in Tab. 2 and Tab. 3). The main contribution to the complexity comes from spatial resolution 
$HW = (224)^2$. With a linear complexity in $HW$, VideoSema made  considerable computational savings from TimeSformer 
\cite{gberta_2021_ICML_space_time_attention}, as seen in Table \ref{tab:K400_SOTA} on $16\times 224^2$ resolution of K400 dataset where VideoSema's parameter size is a fraction 31/121 of TimeSformer's while achieving better accuracies. We present visualization of each space-time attention in Fig. \ref{fig:spacetime_attention_type}.

\section{Theoretical Explanation}

\subsection{A Simplified Setting}

In this section, we show that in an ideal setting, the split space-time attention can be equivalent to full attention. Given an input $x \in\mathbb{R}^{N\times T\times d}$, where $N, T, d$ represent spatial, temporal and channel dimensions respectively. For simplicity, let $V = x$, then the full attention for a fixed query $q$ becomes: 
\begin{equation}
    A_{ST} = \sum_{i=1}^{N}\sum_{j=1}^{T} \eta_{ij} x_{ij},\;\; \eta_{ij} = \dfrac{\phi(q \, k_{ij}^T)}{\sum_{ij} \phi(q \, k_{ij}^T)}, \label{full-st-attn}
\end{equation}
where 
$k_{ij}$ and $x_{ij}$ are vectors in $\mathbb{R}^{d}$ at space-time location $(i,j)$,  $1\leq i \leq N$, $1\leq j\leq T$.  Here we define $\phi: \mathbb{R} \rightarrow \mathbb{R}^{+}$ to be any continuous function. 

Now we compute split space-time attention $A_{T+S}$ for the fixed query $q$. Without loss of generality, we have the split space-time function as the composite function of space and then time. The order does not matter, as it is up to a swap in the first and second dimensions of the input $x$. We have

\begin{equation}
    A_{T+S} = \sum_{j=1}^{T} \alpha_{j} \left(\sum_{i=1}^{N} \beta_{N(j-1)+i} \, x_{ij}\right), \label{split_attn}
\end{equation}
where $\alpha_j$'s represent attention score in the time dimension, and $\beta_{\cdot}$'s represent the frame-wise attention score. The equivalence $A_{ST} = A_{T+S}$  is realized if the following system of equations hold:

\begin{align}
    &\eta_{ij} = \alpha_j \beta_{N(j-1) + i}, \label{match_full_w}\\
    &\sum_{i,j = 1}^{N,T}\eta_{ij}  = 1,\label{eq:eta_cond}\\
    &\sum_{i=1}^{N} \beta_{N(j-1) + i} = 1,\;\; \forall j \in [1,\dots, T],\label{eq:beta_cond}\\
    &\sum_{j=1}^{T} \alpha_j = 1 \label{eq:alpha_cond}.
\end{align}

An ideal solution for this system exists as follows.
Suppose the standard full attention weights $\eta_{ij}$ are given as in (\ref{full-st-attn}) and so (\ref{eq:eta_cond}) is satisfied. To match these attention scores by the split space-time attention 
(\ref{split_attn}), we let 
\begin{equation}
    \beta_{N(j-1)+i} = \dfrac{\eta_{ij}}{\sum_{l=1}^{N} \eta_{lj}}, \label{fac_beta}
\end{equation}
and 
\begin{equation}
    \alpha_{j} = \sum_{l=1}^{N} \eta_{lj}, \label{fac_alpha}
\end{equation}
then (\ref{eq:beta_cond}) and (\ref{eq:alpha_cond}) hold automatically. 
In practice however, one computes the weights $(\alpha_{\cdot},\beta_{\cdot})$ of the split attention
(\ref{split_attn}) without knowledge of full attention weights. The corresponding factorization (\ref{match_full_w}) 
may not satisfy normalization 
condition (\ref{eq:eta_cond}), and may not be equal to the $(\eta_{ij})$  
in (\ref{full-st-attn}). 
The ideal solution 
(\ref{fac_beta})-(\ref{fac_alpha})
would only be an indirect theoretical target for the training of $A_{T+S}$.

\subsection{Attention Factorization and Rank Conditions}
Now we turn to standard attention. The previous section only addresses a single query, however in practice one works with
multiple queries. Assume we have $M$ queries, and let $m\in \{1,\dots, M\}$ index the query row, and  let $\eta_{mij}$ be the full softmax attention coefficient from query $m$ to spatial location $i$ in frame $j$. The single-query construction above applies row-wise, so define
\begin{equation}
    \alpha_{mj} = \sum_{l=1}^{N}\eta_{mlj},
    \qquad
    \beta_{mi|j} = \dfrac{\eta_{mij}}{\alpha_{mj}}.
\end{equation}

Since softmax coefficients are strictly positive, $\alpha_{mj}>0$ and the definition is well-defined. Moreover,
\begin{equation}
    \sum_{j=1}^{T}\alpha_{mj}=1,
    \qquad
    \sum_{i=1}^{N}\beta_{mi|j}=1,
    \qquad
    \alpha_{mj}\beta_{mi|j}=\eta_{mij}.
\end{equation}
Thus the split output equals the full output for every query:
\begin{equation}
    A_{ST}^{(m)}
    =
    \sum_{j=1}^{T}\sum_{i=1}^{N}\eta_{mij}x_{ij}
    =
    \sum_{j=1}^{T}\alpha_{mj}
    \left(\sum_{i=1}^{N}\beta_{mi|j}x_{ij}\right)
    =
    A_{T+S}^{(m)}.
\end{equation}
Therefore split space time attention is an exact marginal-conditional decomposition of each row of the full attention matrix. The only remaining question is whether a chosen dot-product parameterization can realize the required temporal and spatial logits.
\medskip

For positive normalized attention, $\alpha_{mj}>0$, and these coefficients again satisfy $\alpha_{mj}\beta_{mi|j}=\eta_{mij}$. A particular split attention module realizes this decomposition exactly when its temporal branch can produce the distribution $\alpha_{m,:}$ and its spatial branch can produce the conditional distributions $\beta_{m:|j}$ under the same normalized $\phi$ operation. If $\phi$ is invertible on its positive range, one possible choice of temporal and spatial scores is any $r_{mj}$ and $s_{mij}$ satisfying
\begin{equation}
    \phi(r_{mj}) = c_m\alpha_{mj},
    \qquad
    \phi(s_{mij}) = c_{mj}\beta_{mi|j},
\end{equation}
for arbitrary positive constants $c_m$ and $c_{mj}$.
\begin{theorem}[Exact rank condition for normalized $\phi$]\label{thm:exact_rank_phi_split}
    Assume $\phi:\mathbb{R}\rightarrow\mathbb{R}^{+}$ is invertible on its positive range. Choose positive constants $c_m$ and $c_{mj}$ so that $c_m\alpha_{mj}$ and $c_{mj}\beta_{mi|j}$ lie in the range of $\phi$, and define the target temporal score matrix $R^{\phi}\in\mathbb{R}^{M\times T}$ and target spatial score matrix $S^{\phi}\in\mathbb{R}^{M\times NT}$ by
    \begin{equation}
        R^{\phi}_{mj}
        =
        \phi^{-1}(c_m\alpha_{mj}),
        \qquad
        S^{\phi}_{m,(j,i)}
        =
        \phi^{-1}(c_{mj}\beta_{mi|j}).
    \end{equation}
    Let $d_T,d_S\in\mathbb{N}$ be the temporal and spatial head dimensions. If
    \begin{equation}
        d_T \ge \operatorname{rank}(R^{\phi}),
        \qquad
        d_S \ge \operatorname{rank}(S^{\phi}),
    \end{equation}
    then there exist learned query and key matrices $Q^{\tau} \in\mathbb{R}^{M\times d_T},K^{\tau}\in\mathbb{R}^{T\times d_T}$ and $Q^{S}\in\mathbb{R}^{M\times d_S},K^{S}\in\mathbb{R}^{NT\times d_S}$ such that
    \begin{equation}
        Q^{\tau}(K^{\tau})^{T}=R^{\phi},
        \qquad
        Q^{S}(K^{S})^{T}=S^{\phi}.
    \end{equation}
    Consequently, normalized $\phi$ attention over the temporal scores produces $\alpha_{m,:}$, normalized $\phi$ attention over the spatial scores inside each frame produces $\beta_{m:|j}$, and split space time attention exactly equals full attention for every query row $m$.
\end{theorem}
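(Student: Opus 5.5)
The plan is to treat the two score matrices separately, using the fact that any real matrix of rank $r$ factors as a product of an $M\times r$ matrix and an $r\times n$ matrix. After that, we verify that the normalized $\phi$ operation recovers the target distributions, and finish with the marginal-conditional identity already established before the statement.

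\emph{Step 1 (well-definedness).} By hypothesis the constants $c_m$ and $c_{mj}$ place $c_m\alpha_{mj}$ and $c_{mj}\beta_{mi|j}$ in the range of $\phi$. Since $\phi$ is invertible there, the entries $R^{\phi}_{mj}$ and $S^{\phi}_{m,(j,i)}$ are well-defined real numbers, so $R^{\phi}$ and $S^{\phi}$ are genuine real matrices.

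\emph{Step 2 (factorization).} Let $r=\operatorname{rank}(R^{\phi})$ and write a rank factorization $R^{\phi}=UW^{T}$ with $U\in\mathbb{R}^{M\times r}$ and $W\in\mathbb{R}^{T\times r}$. This can be obtained from the SVD $R^{\phi}=U_0\Sigma V_0^{T}$ by setting $U=U_0\Sigma$ and $W=V_0$, keeping only the nonzero singular values. Since $d_T\ge r$, pad $U$ and $W$ with $d_T-r$ zero columns to form $Q^{\tau}$ and $K^{\tau}$. Padding does not change the product, so $Q^{\tau}(K^{\tau})^{T}=R^{\phi}$. The same construction applied to $S^{\phi}$ with $d_S$ gives $Q^{S}$ and $K^{S}$.

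\emph{Step 3 (recovering the weights).} For the temporal branch, the normalized $\phi$ attention of query $m$ is
\[
\frac{\phi(R^{\phi}_{mj})}{\sum_{j'}\phi(R^{\phi}_{mj'})}=\frac{c_m\alpha_{mj}}{c_m\sum_{j'}\alpha_{mj'}}=\alpha_{mj},
\]
using $\sum_j\alpha_{mj}=1$. For the spatial branch, normalizing inside frame $j$, that is, over the columns $(j,i)$ with $i=1,\dots,N$, gives
\[
\frac{c_{mj}\beta_{mi|j}}{c_{mj}\sum_{i'}\beta_{mi'|j}}=\beta_{mi|j},
\]
using $\sum_i\beta_{mi|j}=1$. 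In both cases the arbitrary constants cancel, because each constant depends only on the indices over which the normalization does not run. Substituting these weights into the split output and using $\alpha_{mj}\beta_{mi|j}=\eta_{mij}$, as shown just before the theorem, yields $A_{T+S}^{(m)}=A_{ST}^{(m)}$ for every query row $m$.

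\emph{Main obstacle.} I expect the main difficulty to be bookkeeping rather than depth. The spatial scores are arranged as a single $M\times NT$ matrix, while the normalization is carried out blockwise per frame. One must check that the choice of $c_{mj}$, allowed to vary with both $m$ and $j$, is compatible with realizing all frames through one factorization $Q^{S}(K^{S})^{T}$. This holds because the factorization reproduces every entry of $S^{\phi}$ exactly, whatever the constants, so the blockwise normalization sees precisely the intended values. A second point to state explicitly is that $Q^{\tau}$, $K^{\tau}$, $Q^{S}$ and $K^{S}$ are free matrices. The theorem asserts existence and does not claim they arise as linear images of the input $x$.
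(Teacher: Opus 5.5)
Your proposal is correct and follows the paper's proof essentially step for step. You factor $R^{\phi}$ and $S^{\phi}$ (e.g.\ via the compact SVD), check that the per-row and per-frame normalizations cancel $c_m$ and $c_{mj}$ to recover $\alpha_{mj}$ and $\beta_{mi|j}$, and conclude from $\alpha_{mj}\beta_{mi|j}=\eta_{mij}$; your zero-padding to exactly $d_T$ and $d_S$ columns and your remark on the blockwise normalization of $S^{\phi}$ make explicit details the paper leaves implicit.
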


\begin{proof}
    The rank assumptions imply matrix factorizations $R^{\phi}=Q^{\tau}(K^{\tau})^{T}$ and $S^{\phi}=Q^{S}(K^{S})^{T}$, for example by the compact singular value decomposition. For each query row $m$, normalizing $\phi(R^{\phi}_{m,:})$ over the temporal index gives
    \begin{equation}
        \dfrac{\phi(R^{\phi}_{mj})}{\sum_{n=1}^{T}\phi(R^{\phi}_{mn})}
        =
        \dfrac{c_m\alpha_{mj}}{\sum_{n=1}^{T}c_m\alpha_{mn}}
        =
        \alpha_{mj}.
    \end{equation}
    Likewise, for each fixed $(m,j)$, normalizing $\phi(S^{\phi}_{m,(j,:)})$ over the spatial index gives
    \begin{equation}
        \dfrac{\phi(S^{\phi}_{m,(j,i)})}{\sum_{l=1}^{N}\phi(S^{\phi}_{m,(j,l)})}
        =
        \dfrac{c_{mj}\beta_{mi|j}}{\sum_{l=1}^{N}c_{mj}\beta_{ml|j}}
        =
        \beta_{mi|j}.
    \end{equation}
    \medskip
    
    Therefore the split coefficient is $\alpha_{mj}\beta_{mi|j}=\eta_{mij}$ for all $m,i,j$.
\end{proof}

\begin{remark}[Softmax as a special case]
    If $\phi(x)=\exp(x)$, then the above construction recovers the usual softmax factorization. In this case
    \begin{equation}
        \beta_{mi|j}
        =
        \dfrac{\exp(e_{mij})}{\sum_{l=1}^{N}\exp(e_{mlj})},
        \qquad
        \alpha_{mj}
        =
        \dfrac{\exp(\tau_{mj})}{\sum_{n=1}^{T}\exp(\tau_{mn})},
    \end{equation}
    where the temporal score can be chosen as the frame-level log-sum-exp
    \begin{equation}
        \tau_{mj}
        =
        \log\left(\sum_{l=1}^{N}\exp(e_{mlj})\right),
        e_{mij} = q_m k_{ij}^{T}.
    \end{equation}
    More generally, if the desired normalized coefficients $\eta_{mij}$ are already known, softmax scores can be chosen as
    \begin{equation}
        r_{mj}
        =
        \log(\alpha_{mj}) + c_m,
        \qquad
        s_{mij}
        =
        \log(\beta_{mi|j}) + c_{mj},
    \end{equation}
    since adding a constant to all scores in a normalized softmax does not change the resulting distribution. This is the special case of Theorem \ref{thm:exact_rank_phi_split} with $\phi^{-1}(x)=\log(x)$.
\end{remark}

Now suppose that we are given a full rank condition as in Theorem \ref{thm:exact_rank_phi_split}, then we can realize the matrices $Q^{\tau}, K^{\tau}, Q^S, K^S$ under a typical condition, for example, given $y \in\mathbb{R}^{M\times d_T}$, we want $Q^{\tau} = yW_Q$, for some $W_Q \in\mathbb{R}^{d_T\times d_T}$. Thus we solve the optimization problem
\begin{equation}
    \min_W \|Q^{\tau} - yW\|_2,
\end{equation}
and this has an exact solution if $col(Q^{\tau}) \subset col(y)$.

\begin{theorem}[Low-rank approximation for normalized $\phi$]\label{thm:approx_rank_phi_split}
    Assume $\phi$ is invertible on its positive range, and let $R^{\phi}$ and $S^{\phi}$ be the target score matrices from Theorem \ref{thm:exact_rank_phi_split}. Let $\widehat{R}$ and $\widehat{S}$ be rank-constrained approximations with
    \begin{equation}
        \operatorname{rank}(\widehat{R})\le d_T,
        \qquad
        \operatorname{rank}(\widehat{S})\le d_S.
    \end{equation}
    Define the approximate normalized coefficients
    \begin{equation}
        \widehat{\alpha}_{mj}
        =
        \dfrac{\phi(\widehat{R}_{mj})}{\sum_{n=1}^{T}\phi(\widehat{R}_{mn})},
        \qquad
        \widehat{\beta}_{mi|j}
        =
        \dfrac{\phi(\widehat{S}_{m,(j,i)})}
        {\sum_{l=1}^{N}\phi(\widehat{S}_{m,(j,l)})},
    \end{equation}
    and let $\widehat{\eta}_{mij}=\widehat{\alpha}_{mj}\widehat{\beta}_{mi|j}$. If
    \begin{equation}
        \epsilon_T
        =
        \max_m
        \|\widehat{\alpha}_{m,:}-\alpha_{m,:}\|_1,
        \qquad
        \epsilon_S
        =
        \max_{m,j}
        \|\widehat{\beta}_{m:|j}-\beta_{m:|j}\|_1,
    \end{equation}
    then for every query row $m$,
    \begin{equation}
        \sum_{j=1}^{T}\sum_{i=1}^{N}
        |\widehat{\eta}_{mij}-\eta_{mij}|
        \le
        \epsilon_T+\epsilon_S.
    \end{equation}
    If additionally $\|x_{ij}\|_2\le B$ for all $i,j$, then
    \begin{equation}
        \|\widehat{A}_{T+S}^{(m)}-A_{ST}^{(m)}\|_2
        \le
        B(\epsilon_T+\epsilon_S).
    \end{equation}
\end{theorem}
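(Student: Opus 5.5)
The argument is a product-perturbation bound applied row by row. Nothing from the rank structure of $\widehat{R},\widehat{S}$ is actually needed: once $\widehat{\alpha}$ and $\widehat{\beta}$ are defined, the rank constraints only guarantee, via the factorization argument of Theorem \ref{thm:exact_rank_phi_split}, that these coefficients are realizable by query/key matrices of head dimensions $d_T,d_S$. The inequality itself depends only on the facts that $\widehat{\alpha}_{m,:}$, $\alpha_{m,:}$ and each $\widehat{\beta}_{m:|j}$, $\beta_{m:|j}$ are probability vectors (nonnegative, summing to one) and on the identity $\eta_{mij}=\alpha_{mj}\beta_{mi|j}$ established before Theorem \ref{thm:exact_rank_phi_split}. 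Nonnegativity holds because $\phi$ maps into $\mathbb{R}^{+}$, and the sums equal one by construction.

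Fix a query row $m$. I would use the telescoping split
\begin{equation*}
\widehat{\eta}_{mij}-\eta_{mij}
=
\widehat{\alpha}_{mj}\bigl(\widehat{\beta}_{mi|j}-\beta_{mi|j}\bigr)
+
\bigl(\widehat{\alpha}_{mj}-\alpha_{mj}\bigr)\beta_{mi|j}.
\end{equation*}
Applying the triangle inequality and summing over $i$ and $j$, the first term is bounded by
\begin{equation*}
\sum_j \widehat{\alpha}_{mj}\,\|\widehat{\beta}_{m:|j}-\beta_{m:|j}\|_1
\le
\epsilon_S\sum_j\widehat{\alpha}_{mj}
=
\epsilon_S .
\end{equation*}
For the second term, summing over $i$ first gives $\sum_i\beta_{mi|j}=1$, which leaves $\|\widehat{\alpha}_{m,:}-\alpha_{m,:}\|_1\le\epsilon_T$. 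Adding the two bounds gives the $\ell_1$ estimate $\epsilon_T+\epsilon_S$.

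For the output bound, note that $\widehat{A}_{T+S}^{(m)}=\sum_{j,i}\widehat{\eta}_{mij}x_{ij}$ by expanding the nested sum in (\ref{split_attn}). Also $A_{ST}^{(m)}=\sum_{j,i}\eta_{mij}x_{ij}$. Their difference is therefore $\sum_{i,j}(\widehat{\eta}_{mij}-\eta_{mij})x_{ij}$. The triangle inequality together with $\|x_{ij}\|_2\le B$ then bounds its norm by $B$ times the $\ell_1$ quantity already controlled, which gives $B(\epsilon_T+\epsilon_S)$.

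I do not expect a real obstacle here. The only point needing care is choosing the telescoping order so that the weights multiplying each error term sum to one. Pairing $\widehat{\alpha}$ with the $\beta$-error and the exact $\beta$ with the $\alpha$-error achieves this. A careless split, for example pairing $\widehat{\beta}$ with the $\alpha$-error and $\widehat{\alpha}$ with the $\beta$-error, would leave a cross term $(\widehat{\alpha}-\alpha)(\widehat{\beta}-\beta)$, which is harmless but worsens the constant.
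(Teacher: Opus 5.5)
Your proposal is correct and matches the paper's proof: the paper also adds and subtracts $\widehat{\alpha}_{mj}\beta_{mi|j}$, sums over $i,j$ using that $\widehat{\alpha}_{m,:}$ and $\beta_{m:|j}$ are probability vectors to get $\epsilon_T+\epsilon_S$, and then obtains the output bound from the triangle inequality with $\|x_{ij}\|_2\le B$. Your remark that the rank constraints play no role in the inequality itself also agrees with the paper's argument, which never uses them.
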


\begin{proof}
    Add and subtract $\widehat{\alpha}_{mj}\beta_{mi|j}$:
    \begin{align}
        |\widehat{\eta}_{mij}-\eta_{mij}|
        &=
        |\widehat{\alpha}_{mj}\widehat{\beta}_{mi|j}
        -
        \alpha_{mj}\beta_{mi|j}|\\
        &\le
        \widehat{\alpha}_{mj}
        |\widehat{\beta}_{mi|j}-\beta_{mi|j}|
        +
        |\widehat{\alpha}_{mj}-\alpha_{mj}|\beta_{mi|j}.
    \end{align}
    Summing over $i,j$, and using that $\widehat{\alpha}$ and $\beta$ are probability distributions, gives
    \begin{equation}
        \sum_{j=1}^{T}\sum_{i=1}^{N}
        |\widehat{\eta}_{mij}-\eta_{mij}|
        \le
        \|\widehat{\alpha}_{m,:}-\alpha_{m,:}\|_1
        +
        \sum_{j=1}^{T}\widehat{\alpha}_{mj}
        \|\widehat{\beta}_{m:|j}-\beta_{m:|j}\|_1
        \le
        \epsilon_T+\epsilon_S.
    \end{equation}
    The output bound follows from
    \begin{equation}
        \|\widehat{A}_{T+S}^{(m)}-A_{ST}^{(m)}\|_2
        \le
        \sum_{j=1}^{T}\sum_{i=1}^{N}
        |\widehat{\eta}_{mij}-\eta_{mij}|\|x_{ij}\|_2.
    \end{equation}
\end{proof}

\begin{remark}
In particular, one may choose $\widehat{R}$ and $\widehat{S}$ as best rank-$d_T$ and rank-$d_S$ approximations of $R^{\phi}$ and $S^{\phi}$ in Frobenius norm. By the Eckart-Young theorem,
\begin{equation}
    \|R^{\phi}-\widehat{R}\|_{F}^{2}
    =
    \sum_{r>d_T}\sigma_r(R^{\phi})^2,
    \qquad
    \|S^{\phi}-\widehat{S}\|_{F}^{2}
    =
    \sum_{r>d_S}\sigma_r(S^{\phi})^2,
\end{equation}
where $\sigma_r$ is the $r$ singular value. Thus the approximation quality is controlled by the singular value tails of the target temporal and spatial score matrices, together with how sensitively the normalized $\phi$ map converts score errors into coefficient errors.
\end{remark}

\begin{remark}
    We are mostly dealing with long sequence, thus the number of query $M$ or spatial $N$ is large. Therefore the rank condition is quite strict, that is we are most likely to be in the scenario of Theorem \ref{thm:approx_rank_phi_split}. However when we work with short video clips as in SSv2 dataset, i.e. $T$ is small, $Q^{\tau}$ and $K^{\tau}$ may be found exactly during training.
\end{remark}

\section{Experiments}

\begin{table}[ht!]
    \caption{Performance reported on  standard K400 dataset. Here $F\times m\times n$ denotes by $F$ the total FLOPs, $m$ the number of testing segments, $n$ the number of testing crops. Res: resolution, T: transformer. P(M): parameter size in millions. T1: top-1, T5: top-5. C: CNN, CT: CNN +T, M: Mamba, Ml: Mamba-like.}
    \centering
    \begin{tabular}{|l|c|c c| c |c c|}
    \hline
         Method & Type  & P(M) &  FLOPs (G) & Res& T1(\%) & T5(\%)\\
         \hline
         X3D-XL \cite{Feichtenhofer_2020_CVPR} & C & 20   &$194\times 3\times 10$ & $16\times 224^2$&80.4 & 94.6\\
         Swin-T \cite{Liu_2022_CVPR} & T & 28 & $88\times 3\times 4$ & $32\times 224^2$&78.8 & 93.6 \\
         MViTv1-B \cite{Fan_2021_ICCV} & CT & 37 & $70\times 1\times 5$ & $32\times 224^2$ & 80.2 & 94.4 \\
         MViTv2-S \cite{Li_2022_CVPR} & CT & 35 & $64\times 1\times 5$ & $16\times 224^2$ & 81.0 & 94.6 \\
         Uniformer-S \cite{li2022uniformer} & CT & 21 & $42\times 1\times 4$ & $16\times 224^2$ & 80.8 & 94.7 \\
         WLiT \cite{sun2023wlit} & T & 22 & $21\times 1 \times 4$ & $8\times 224^2$ & 74.6 & 92.0 \\
         \hline
         TimeSformer-L \cite{gberta_2021_ICML_space_time_attention} & T & 121 & $2380\times 3\times 1$ & $16\times 224^2$ & 80.7 & 94.7 \\
         Uniformer-S \cite{li2022uniformer} & CT & 311 & $3992\times 3\times 4$ & $16\times 224^2$ & 81.3 & 94.7 \\
         Mformer-HR \cite{patrick2021keeping} & T & 311 & $959\times 3\times 10$ & $16\times 336^2$ & 81.1 & 95.2 \\
         VideoMamba-M \cite{li2024videomambastatespacemodel} & M & 74& $202\times 3\times 4$& $16\times 224^2$& 81.9 & 95.4   \\ 
         \hline
         \hline
         VideoMamba-S \cite{li2024videomambastatespacemodel} & M & 26& $34\times 3\times 4$& $8\times 224^2$& 79.3 & 94.2   \\ 
          VideoMamba-S \cite{li2024videomambastatespacemodel} & M & 26& $68\times 3\times 4$& $16\times 224^2$& 80.8 & 94.8\\
         VideoMamba-S \cite{li2024videomambastatespacemodel} & M & 26& $135\times 3\times 4$& $32\times 224^2$& 81.5 & 95.2\\
         \hline
         VideoSEMA (Ours) &  Ml & 31 & $46 \times 3 \times 4$ & $8\times 224^2$& \textbf{81.3} & \textbf{94.9}\\
          VideoSEMA (Ours)&  Ml & 31 & $87\times 3 \times 4$ & $16\times 224^2$& \textbf{82.4} & \textbf{95.4}\\
         VideoSEMA (Ours) &  Ml & 31 & $172\times 3\times 4$ & $32\times 224^2$& \textbf{82.6} & \textbf{95.2} \\
         \hline
    \end{tabular}

    \label{tab:K400_SOTA}
\end{table}

\begin{table}[ht!]
    \caption{Performance reported on  standard SSv2 dataset. Here $F\times m\times n$ denotes by $F$ the total FLOPs, $m$ the number of testing segments, and $n$ the number of testing crops. Res: resolution, T: transformer.  P(M): parameter size in millions. T1: top-1, T5: top-5. C: CNN, CT: CNN +T, M: Mamba, Ml: Mamba-like.}
    \centering
    \begin{tabular}{|l|c|c c| c |c c| }
    \hline
         Method & Type  &  P(M) &  FLOPs (G) & Res& T1(\%) & T5(\%)\\
         \hline
         CT-Net$_{R50}$ \cite{li2021ctnet} & C & 21& $75\times 1\times 1$& $16\times 224^2$& 64.5 & 89.3\\
         TDN$_{R50}$ \cite{Wang_2021_CVPR} & C & 26& $75\times 1\times 1$& $16\times 224^2$& 65.3 & \textbf{91.6}\\
         WLiT \cite{sun2023wlit} & T & 22 & $50\times 3 \times 1$ & $16\times 224^2$ & 66.3 & 91.5 \\
         VideoMAE \cite{NEURIPS2022_VideoMAE} & T & 22& $57\times 2\times 3$& $16\times 224^2$& 66.8 & 90.3\\
         MViTv1-B \cite{Fan_2021_ICCV} & CT & 37& $71\times 3\times 1$& $16\times 224^2$& 64.7 & 89.2\\
         \hline
         \hline
         VideoMamba-S \cite{li2024videomambastatespacemodel} & M & 26& $34\times 3\times 2$& $8\times 224^2$& 65.2 & 89.6\\ 
         VideoMamba-S \cite{li2024videomambastatespacemodel} & M & 26& $68\times 3 \times 2$& $16\times 224^2$& 66.0 & 90.2\\
         \hline
         VideoSEMA (Ours) & Ml & 31& $46\times 3 \times 2$ & $8\times 224^2$& \textbf{67.2} & 91.0 \\ 
         VideoSEMA (Ours) & Ml & 31& $87\times 3\times 2$ & $16\times 224^2$& \textbf{67.3} & 90.4 \\
         \hline
    \end{tabular}

    \label{tab:SthSthv2}
\end{table}

\subsection{Dataset}
We evaluate our approach on two widely used large-scale video action recoginition benchmarks: Kinetic-400 (K400) \cite{kay2017kineticshumanactionvideo} and Something-Something v2 (SSv2) \cite{8237884}. K400 contains around 240000 training, 20000 validation, and 40000 testing videos spanning 400 human action categories, with clips source from YouTube and trimmed to focus on a single action that lasted around 10 seconds. The dataset is focused on human action from diverse scenes, actors, and viewpoints, making it a standard benchmark for learning high-level semantic. In contrast, SSv2 consists of around 220000 videos, with approximately 170000 training, 25000 validation and 27000 testing videos that last around 2-6 seconds. SSv2 places a strong temporal reasoning and fine-grained actions. Together, these datasets provide complementary evaluation settings.
\subsection{Setting}
A common strategy to train a video model is to pretrain a model with an image-based architecture \cite{gberta_2021_ICML_space_time_attention,li2024videomambastatespacemodel, Liu_2022_CVPR, NEURIPS2022_VideoMAE} on ImageNet-1K or ImageNet-21K, and then inflate the image model into a video model. For fair comparison with VideoMamba, we pretrain the SEMA model on ImageNet-1K, then incorporate the temporal component and train the model on K400 and SSv2 to obtain the VideoSEMA results. For SEMA, we use the same training setup as described in \cite{tran2025semascalableefficientmamba}. We use the setting of the T model variant, i.e.  4 stages with hidden dimension of 64, 128, 256, and 512 respectively. To train VideoSEMA, we use a similar setup to VideoMamba. In particular, for K400 we use a set of 5 warmup epochs, 50 total epochs, a 0.35 stochastic depth rate, and 0.05 weight decay, with an initial learning rate of $2\times 10^{-4}$ using the AdamW optimizer. For SSv2, we use a set of 5 warmup epochs, 30 total epochs, a 0.35 stochastic depth rate, and 0.05 weight decay, with initial learning rate of $4\times 10^{-4}$. We train and test VideoSEMA and VideoMamba on 8 NVIDIA RTX A6000 GPUs, each with 46G of memory. Code will be available upon publication.

\subsection{Experimental Results}

\begin{figure}
    \centering
    \begin{subfigure}{\textwidth}
        \includegraphics[width=\textwidth]{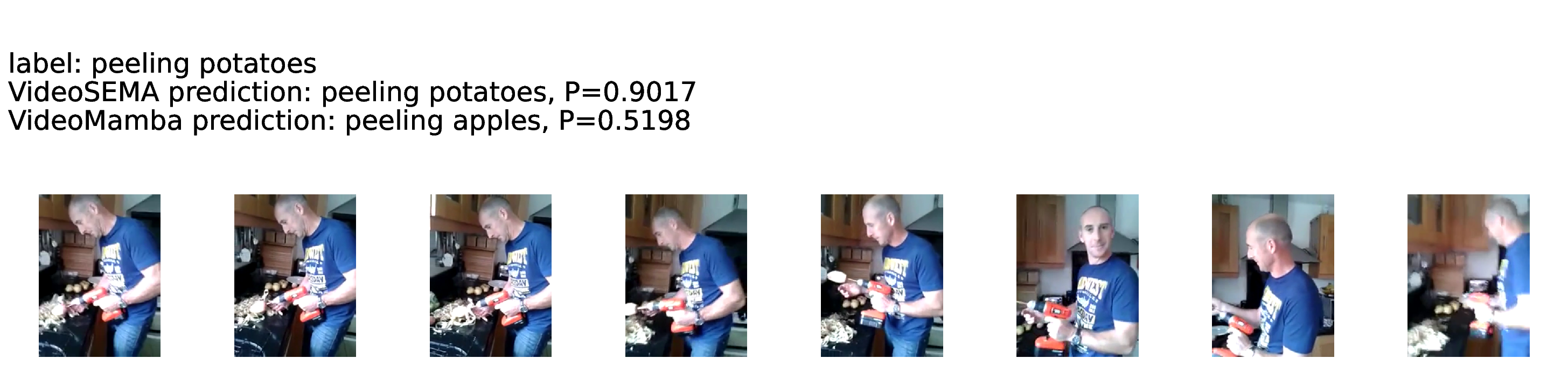}
        \caption{example depicts a person peeling potatoes, where VideoSEMA correctly predicts the action with high ($P=0.9$), while VideoMamba incorrectly classifies it as peeling apples.}
    \end{subfigure}
        \begin{subfigure}{\textwidth}
        \includegraphics[width=\textwidth]{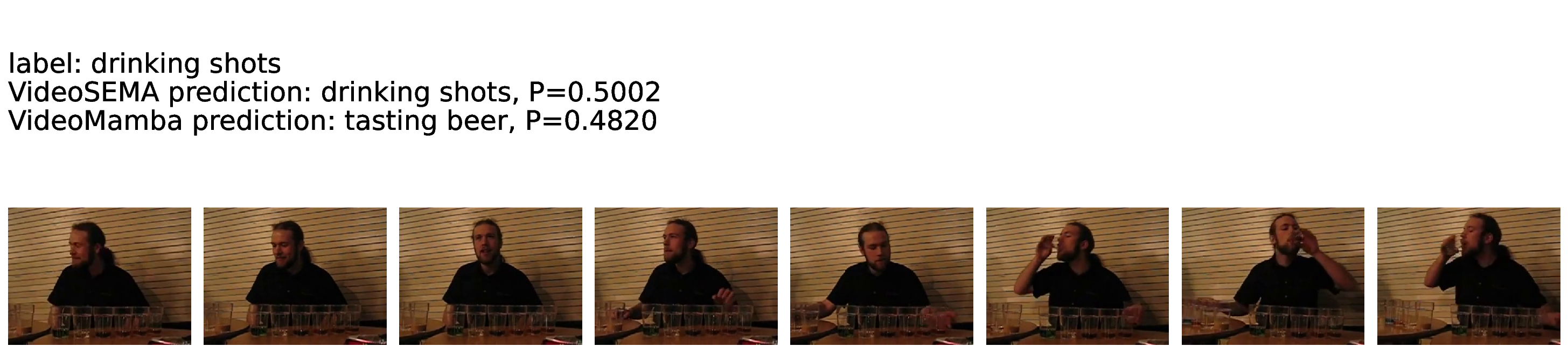}
        \caption{example shows a person drinking shots, which VideoSEMA correctly recognizes with moderate confidence $P=0.5$.}
    \end{subfigure}
        \begin{subfigure}{\textwidth}
        \includegraphics[width=\textwidth]{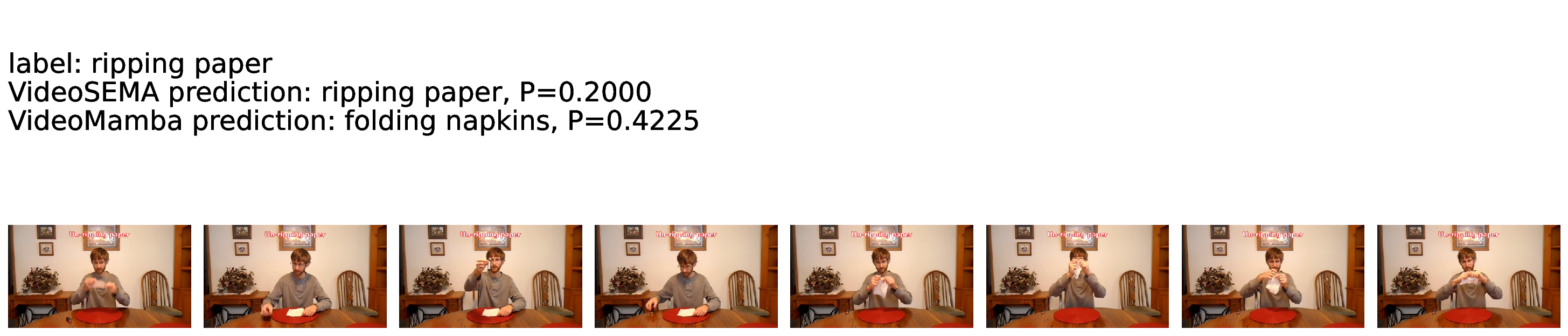}
        \caption{example contains a video of a person ripping paper played in reverse time; VideoSEMA correctly identifies the action with low confidence, whereas VideoMamba fails to classify it correctly despite assigning moderate confidence to its prediction.}
    \end{subfigure}
    \caption{Qualitative comparison of VideoSEMA and VideoMamba on the K400 test set. Shown are representative test samples along with the predicted labels and corresponding highest confidence scores.}
    \label{fig:k400-visual}
\end{figure}


\subsubsection{K400}
We present the overall performance of VideoSEMA on K400 dataset in Table \ref{tab:K400_SOTA}. Compared to VideoMamba under a similar computational budget, VideoSEMA's top-1 accuracies are 1-2\% better across different input resolutions. Compared to other state of the art methods under similar training methodologies and computational budgets shown in the first row group of Table \ref{tab:K400_SOTA}, VideoSEMA performs significantly better in both top-1 and top-5 metrics. For example, at an input resolution of $16\times 224^2$, it is $1.4\%$ better than MViTv2-S in top-1 and $0.8\%$ in top-5.

Notably, the second row group of Table \ref{tab:K400_SOTA} includes models with substantially larger parameter counts and FLOPs. Despite operating under much smaller computational budget, VideoSEMA consistently achieves superior performance. This demonstrates that VideoSEMA is a 

In addition, we present a few visual example comparisons between VideoSEMA and VideoMamba on somewhat challenging predictions in Fig. \ref{fig:k400-visual}. Example (A) depicts a video of a person peeling potatoes. VideoSEMA correctly predicts the action with a high probability of $90\%$, while VideoMamba incorrectly classifies it as the similar label peeling apples. The global action of peeling is correct for both models; however, this particular video is challenging due to the local distinction between apples and potatoes, which occupy only a small region of the image. This could be explained by the window attention of SEMA, which keeps the fine details. Example (B) shows a person drinking shots. VideoSEMA correctly identifies the action with a medium probability of $50\%$, while VideoMamba misidentifies it as tasting beer. Drinking shots involves rapidly consuming a small amount of liquid, while tasting beer is a slower action. This shows that the attention in time between frames allows VideoSEMA to understand quick changes in motion between frames, while VideoMamba processes these tokens in sequential order, thus reducing its temporal capability. Lastly, example (C) shows a video of a person ripping paper recorded in reverse time. VideoSEMA classifies it correctly with low confidence of $20\%$, while VideoMamba predicts folding napkins. Because the video plays in reverse, the model needs strong temporal understanding. VideoSEMA with temporal attention allows the model to access frames in both directions simultaneously, which helps the model prediction. In contrast, VideoMamba’s forward-direction processing observes the person putting the paper together and thus predicts folding napkins.

\begin{table}[ht!]
    \caption{Performance (ablation study) of VideoSEMA using various temporal processing units on K400 dataset. Here $F\times m\times n$ denotes by $F$ the total FLOPs, $m$ the number of testing segments, $n$ the number of testing crops.}
    \centering
    \begin{tabular}{|l|c|c c| c |c|}
    \hline
         Time Component  & \# Params (M) &  FLOPs (G) & Res& Top-1 (\%) \\
         \hline
         3DConv (ker=7) & 26 & $40 \times 3 \times 4$ & $8\times 224^2$ &80.0\\
         3DConv (ker=7) & 26 & $76 \times 3 \times 4$ & $16\times 224^2$ &79.3\\
         3DConv (ker=13) & 26 & $76 \times 3 \times 4$ & $16\times 224^2$ &80.8\\
         Mamba & 36 & $47\times 3\times 4$ & $8\times 224^2$& 76.3\\
         Attention & 31 & $46\times 3\times 4$ & $8\times 224^2$ & \textbf{81.3}\\
         
        \hline
    \end{tabular}

    \label{tab:K400_model_choice}
\end{table}

\subsubsection{SSv2}
We present the overall performance of VideoSEMA on the SSv2 dataset in Table \ref{tab:SthSthv2}. We observe that with 8 frames inputs, VideoSEMA achieve a $2\%$ improvement over VideoMamba-S, and with 16 frames inputs, it outperforms VideoMAE by $0.5\%$. Notably, using only 8 frames, VideoSEMA is able to outperform other models that utilize all 16 frames. This demonstrates that VideoSEMA achieves strong efficacy and temporal efficiency. We also note that the performance improvement with 16 frames is smaller compared to that with 8 frames. One possible explanation is the short temporal duration of videos in SSv2, thus limiting the benefit of longer input sequences.

\begin{table}[]
    \centering
    \caption{Top-1 accuracy (\%) at higher input resolutions to models pretrained on  $16\times 224^2$ videos of K400, without fine-tuning.}
    \begin{tabular}{|c|c|c|c|}
    \hline
         Method & $16\times 224^2$ (baseline) & $16\times 512^2$ & $16\times 1024^2$ \\
         \hline
         VideoMamba& 80.8 & 74.4 & 40.8 \\
         VideoSEMA& 82.4 & 75.3 & 54.6\\
         \hline
    \end{tabular}
    
    \label{tab:k400-scale-up}
\end{table}

\subsection{Ablation Study}
We extend the SEMA model from image domain to processing videos by adding a temporal processing unit. There are many standard choices for this component such as convolution, attention, and Mamba. In this subsection, we examine the performance of each choice. For convolution, to maintain relatively low parameter count, we employed a depthwise convolution. For Mamba, we use a single directional Mamba from VideoMamba \cite{li2024videomambastatespacemodel}. And lastly, for attention we use softmax full attention. From Table \ref{tab:K400_model_choice}, we observe that on K400, convolution performs relatively well, and as the number of frames increases, we need larger temporal convolutional kernel. Mamba on the other hand, performs poorly as a temporal processing unit. One possible explanation is that Mamba is applied only in the temporal dimension, resulting in independent operations on spatial positions. Lastly, since attention provides the best trade-off between accuracy and efficiency, we use attention as a temporal processing unit for VideoSEMA on the datasets here.

\subsection{Scalability to Higher Resolution Videos}
SEMA is designed to handle high-resolution frames. In this subsection, we examine the adaptability of the model when applied to larger input resolutions. We use the model pretrained on $224^2$ and evaluate it on larger input sizes of $512^2$ and $1024^2$ with 16 frames on K400. From Tab. \ref{tab:k400-scale-up}, we observe that VideoMamba and VideoSEMA perform similarly on $224^2$; however, at $1024^2$, the performance of VideoMamba degrades much more quickly compared to VideoSEMA. In particular, the performance gap between VideoMamba and VideoSEMA is about 13.8 percentage points. This demonstrates the robustness and scalability of VideoSEMA for large input resolutions.


         



\section{Conclusion}
We introduced a space-time 
attention model (VideoSEMA) consisting of a scalable and efficient Mamba-like attention (SEMA) in space and a regular attention in time. For moderate number of video frames, the model out-performs recent space-time transformers and video-mamba of larger (comparable) sizes on benchmark K400 (SSv2) datasets. 
In future work, we plan to scale the model to longer videos by using dilated/sparse attention \cite{Longnet_2023} in time and evaluate the model's robustness on higher spatial resolutions across more datasets. 

\section*{Acknowledgments}
The work was partly supported by NSF grants DMS-2219904, DMS-2309520, and a Qualcomm Gift Award. NTT was also funded by a Faculty Endowed Fellowship and the Graduate Scholar Success Fund from the University of California, Irvine.
\bibliography{bib}
\bibliographystyle{plain}
\end{document}